\documentclass[10pt,twocolumn]{article}
\usepackage{iftex}
\ifPDFTeX
  \usepackage[utf8]{inputenc}
  \usepackage[T1]{fontenc}
  \usepackage{tgtermes}
  \newcommand{\arbp}[1]{}
\else
  \usepackage{fontspec}
  \IfFontExistsTF{[amiri-regular.ttf]}
    {\newfontfamily\arabicfont[Script=Arabic]{[amiri-regular.ttf]}%
     \newcommand{\arbp}[1]{ ({\arabicfont #1})}}
    {\IfFontExistsTF{[NotoNaskhArabic-Regular.ttf]}
      {\newfontfamily\arabicfont[Script=Arabic]{[NotoNaskhArabic-Regular.ttf]}%
       \newcommand{\arbp}[1]{ ({\arabicfont #1})}}
      {\newcommand{\arbp}[1]{}}}
\fi
\usepackage{microtype}
\usepackage[top=1in,bottom=1in,left=0.75in,right=0.75in]{geometry}
\usepackage{amsmath,amsthm,amssymb}
\usepackage{booktabs}
\usepackage{multirow}
\usepackage{graphicx}
\usepackage{subcaption}
\usepackage{xcolor}
\usepackage{hyperref}
\usepackage{cleveref}
\usepackage{algorithm}
\usepackage{algpseudocode}
\usepackage{enumitem}
\usepackage{url}
\usepackage{balance}
\usepackage{array}
\usepackage{placeins}   
\usepackage{flafter}    
\usepackage[numbers]{natbib} 

\newtheorem{proposition}{Proposition}
\newtheorem{corollary}{Corollary}[proposition]
\newtheorem{remark}{Remark}
\theoremstyle{definition}
\newtheorem{definition}{Definition}
\definecolor{pipelineblue}{RGB}{52,100,178}
\definecolor{todored}{RGB}{200,30,30}

\hypersetup{
  pdftitle={RefLAM: A Reference-Grounded Line Annotation Pipeline for
            Historical Arabic Manuscripts},
  pdfauthor={Mohamed Guechaoui, Mohamed Diaa Zellagui, Souleyman Chaib, Sahraoui Dhelim},
  colorlinks=true,
  linkcolor=pipelineblue,
  citecolor=pipelineblue,
  urlcolor=pipelineblue
}
\title{\textbf{RefLAM: A Reference-Grounded Line Annotation Pipeline\\
  for Historical Arabic Manuscripts}}
\author{%
  Mohamed Guechaoui$^1$ \and Mohamed Diaa Zellagui$^1$
  \and Souleyman Chaib$^1$ \and Sahraoui Dhelim$^1$
  \\[2pt]
  $^1$Higher School of Computer Science (ESI-SBA), Sidi Bel Abbes, Algeria
}
\date{}
\begin{document}
\maketitle
\begin{abstract}
Existing approaches to building line-level Arabic handwritten-text-recognition
(HTR) training data either rely on fully manual annotation, which does not
scale, or on automatic OCR-to-reference alignment methods that have not been
extended to multi-script, two-zone (main-plus-margin) historical manuscript
layouts with a provable correctness guarantee. We present \textbf{RefLAM}
(\textbf{Ref}erence-grounded \textbf{L}ine \textbf{A}nnotation for
\textbf{M}anuscripts), a pipeline that converts manuscript page images and
pre-existing clean transcriptions into validated, line-level ground truth
without sacrificing human oversight. RefLAM couples a deep-learning
page-segmentation model with a multimodal large language model (MLLM) for
structured OCR and a diacritic-agnostic fuzzy alignment engine that grounds
each OCR line in a contiguous span of the reference text, assigning a
character-level confidence score in $[0,100]$.
RefLAM's efficiency rests on a simple guarantee: under our alignment metric,
a perfect score is provably equivalent to character-for-character identity
of the normalised strings (the \emph{Confidence-100 rule}), and a manual
audit of every perfect-scoring line in the released corpus found no
counterexample. Because a reviewer can trust a perfect score, most lines are
confirmed at a glance rather than retyped, and annotation turns from a
uniformly slow process into a triaged one in which human attention
concentrates where the alignment is uncertain. Across 7 fully page-validated
books we measured a 75$\times$ throughput gain over manual annotation
(3,000 vs.\ 40 lines/hr); applying the same guarantee to a further 7 books,
we retained 16,533 confidence-100 main-text lines for release within one
week, while sub-100 lines from those books were excluded from this release
rather than manually corrected (\cref{sec:validation}).
Using RefLAM, we construct and release AraMS-28k, comprising 14 historical
Arabic manuscript books, 3,043 pages, and 27,971 main-text and 629
margin-line annotations with bounding boxes, layout labels, and insertion
anchors for 191 of the margin entries (30.4\%). Finally, we finetune
Muharaf-pretrained baselines (including HATFormer) on AraMS-28k and report
character error rate (CER) results, confirming the corpus's practical
utility for downstream HTR training.
\end{abstract}
\section{Introduction}
\label{sec:intro}
The Arab-Islamic manuscript tradition spans more than a millennium and covers
medicine, astronomy, mathematics, philosophy, and jurisprudence. A large fraction of this
corpus has never been rendered machine-readable.  Photographic digitisation is
largely solved; the unsolved problem is transcription at scale.  Handwritten-text
recognition (HTR) models are data-hungry, yet line-level labelled corpora for
Arabic manuscripts remain scarce.
Producing line-level OCR training data conventionally requires three expensive
manual stages: page segmentation, transcription, and quality verification.  For
historical Arabic manuscripts, four well-documented difficulties compound this
cost: high script variability (Naskh, Ruq\textquoteleft ah, Thuluth, Maghrebi),
degraded scan quality, non-linear two-zone reading order (main body plus
margins), and diacritisation inconsistency.  We measured the cumulative manual
annotation rate directly: a trained annotator completes roughly 40 lines per
hour, including line detection, transcription, and bounding-box drawing — a
rate at which a single 600-page volume consumes the better part of a year.
We address this bottleneck with \textbf{RefLAM}, which exploits two resources
that, for many classical Arabic texts, already exist independently of manuscript
images: (1)~a vision-capable MLLM that produces structured, layout-tagged OCR
from a single zero-shot call; and (2)~a clean ground-truth transcription—typically
a fully diacritised scholarly edition.  Neither resource alone solves the problem:
MLLM output is fluent but unreliable; the reference text is reliable but not
localised to any page image.  RefLAM fuses them via diacritic-agnostic fuzzy
alignment, attaching a confidence score to every line and triaging human review
accordingly.
RefLAM is best read as an instance of \emph{weak (distant) supervision}
\cite{smith2023automatic}: a noisy labelling
source (MLLM OCR) is reconciled against a reliable but unlocalised one (the
reference transcription) until the labels are both accurate and grounded in
the page; \cref{sec:related} places the pipeline in that literature.
The alignment stage has one property we lean on throughout: a maximal
similarity score is not a heuristic signal but a provable guarantee of
character-for-character normalised-string identity (\cref{prop:conf100}).
We noticed the pattern before we proved it — during page-level review of the
earliest books, no perfect-scoring line was ever found to be wrong — and the
proof then explained the observation. This \emph{confidence-100 rule} is what
makes the fast review path safe; it is an enabling property of the alignment
stage, not a contribution that competes with the pipeline itself.
Although we instantiate and validate RefLAM on Arabic manuscripts, the
pipeline places no Arabic-specific requirement beyond the normalisation
operator of \cref{def:norm}; in principle it should transfer to any
historical script for which clean digital transcriptions exist.
\paragraph{Contributions.}
\begin{enumerate}[leftmargin=*,nosep]
  \item \textbf{RefLAM} (\cref{sec:pipeline}): a five-stage reference-grounded
    annotation pipeline (segmentation~$\rightarrow$ MLLM OCR~$\rightarrow$
    normalisation~$\rightarrow$ page anchoring~$\rightarrow$ diacritic-agnostic
    fuzzy line alignment, \cref{sec:alignment}) that achieves a measured
    75$\times$ speed-up over manual annotation while preserving
    full human review at every confidence level, underpinned by a provable
    alignment guarantee — the \emph{confidence-100 rule}
    (\cref{sec:conf100}) — that makes this speed-up safe rather than
    heuristic.
  \item \textbf{AraMS-28k} (\cref{sec:dataset}): the dataset produced by
    RefLAM, presented here as evidence that the pipeline generalises across
    three hand-copied scripts (Naskh, Ruq\textquoteleft ah, Maghrebi), one
    lithographed volume, and two annotation depths — 14 books, 3,043 pages,
    28,600 line annotations, including the first publicly described
    margin/insertion-anchor annotation at this scale. The dataset itself is
    documented in full in a companion paper~\cite{arams2026}.
  \item \textbf{Baseline HTR results} (\cref{sec:baseline}): finetuning
    results for Kraken and HATFormer on AraMS-28k, reported as downstream
    validation that the released corpus supports HTR training, rather than
    as a modelling contribution in its own right.
\end{enumerate}
\FloatBarrier
\section{Related Work}
\label{sec:related}
\subsection{Arabic Manuscript Datasets}
\label{sec:related:datasets}

Public line-level datasets of genuine historical handwritten Arabic
manuscripts remain scarce because they rely on labor-intensive manual
transcription and verification.  RASM2018 \cite{clausner2018rasm}
provides $\approx$120 pages with manually produced line-level
transcriptions and coarse region labels, but no main/margin distinction.
RASAM \cite{vidalgorene2021rasam} covers the Maghrebi family
($\approx$300 pages, 7,540 lines) and annotates margin regions
(\emph{marginalia}, \emph{catchwords}) through manual segmentation
and transcription.  Muharaf \cite{saeed2024muharaf} is the largest
existing corpus overall ($\approx$36,311 lines), of which 24,495 lines
are publicly released; it includes layout tags for paragraph and
floating (margin/footer) regions, but these were produced through
manual annotation workflows without automated reference alignment.
OpenITI MAKHZAN \cite{allen2026openiti} provides line-level
transcriptions across 1,497 pages predominantly manuscripts, with a small printed-Urdu subset — of multilingual Arabic-script
manuscripts (822 Arabic pages), all manually segmented and 
transcribed.  KHATT \cite{mahmoud2014khatt} covers modern handwritten
Arabic only.  No prior work combines an automated pipeline for
producing verified line-level ground truth with explicit main/margin
layout tags and a provable confidence criterion---precisely the
combination RefLAM provides.
\subsection{OCR and Segmentation for Historical Arabic}
Open-source engines such as Kraken \cite{kiessling2019kraken} and the eScriptorium
platform \cite{kiessling2019escriptorium} are widely used.  The recent HATFormer
\cite{chan2025hatformer} is a Transformer-based recogniser targeting historical
handwritten Arabic, requiring large labelled corpora that remain scarce.  For
line segmentation, U-Net-based \cite{mechi2019unet,mechi2021twostep} and deep
learning \cite{neche2019arabic} approaches demonstrate strong performance.
To bootstrap our annotation pipeline, we adopt the publicly available Kraken
segmentation model trained on the Muharaf corpus \cite{bors2024seg}, which
provides polygonal line-segment predictions as a strong initialisation for
our workflow; the human verification protocol built around it is described
in \cref{sec:segmentation}.
\subsection{Reference-Grounded OCR Alignment}
OCR-to-reference alignment as a labelling mechanism is itself a form of weak
(distant) supervision: a noisy but abundant source (OCR) is reconciled
against a reliable but unlocalised source (a reference transcription) to
produce localised labels without full manual annotation. This principle has
been studied extensively for handwritten text recognition~\cite{smith2023automatic}
and in related document-analysis settings~\cite{toselli2021digital}.

Most directly related to our setting is the ACDC framework of
Smith et al.~\cite{smith2023automatic}, which bootstraps line-level HTR
training data for Arabic-script manuscripts by aligning noisy Kraken HTR
output against clean digital editions via an HMM-based collation model.
ACDC iteratively retrains the HTR model on lines selected by empirical
match-rate and gap thresholds, achieving a 19.6\% absolute character-accuracy
improvement without any manual transcription. However, ACDC does not
annotate layout structure (e.g., main versus margin zones), relies on
heuristic quality criteria rather than a provable correctness guarantee,
and operates as a fully automatic pipeline without human review of
individual alignments.

Do et al.~\cite{do2025reference} localise noisy OCR paragraphs within
e-books via Levenshtein-based fuzzy matching for classical Vietnamese books.
RefLAM pursues the same principle but (i)~operates at \emph{line} rather
than paragraph granularity; (ii)~targets \emph{handwritten} multi-script
two-zone Arabic pages; (iii)~adds an explicit main/margin layout tag to
every line; and (iv)~\emph{formally characterises} the confidence
threshold at which the correspondence is provably exact.

Vision-capable MLLMs such as Gemini \cite{team2023gemini} demonstrate
strong OCR performance on complex layouts
\cite{greif2025multimodal}. Known MLLM OCR failure
modes --- hallucinated characters or words, repeated spans, and silent
omission of illegible text --- are exactly the patterns RefLAM's
alignment stage is designed to catch: in RefLAM, MLLM output is treated
as a noisy hypothesis, and each of these failure modes depresses the
similarity score against the reference and routes the line to detailed
human review (\cref{sec:erroranalysis}).
\FloatBarrier
\section{The RefLAM Pipeline}
\label{sec:pipeline}
\subsection{Overview and Design Principles}
\label{sec:overview}
RefLAM is built under three non-negotiable constraints.
\textbf{Human oversight at all times}: no annotation enters the dataset without
human review at any confidence level.
\textbf{Substantial throughput improvement}: the system must make the majority
of individual review decisions dramatically faster than full manual transcription.
\textbf{Auditability}: every line record is traceable to the reference span that
produced it, and the build is reproducible given fixed configuration and cached
MLLM responses.
Each page traverses five stages (see \cref{fig:pipeline}):
(1)~line segmentation producing bounding boxes and layout labels;
(2)~MLLM structured OCR yielding a layout-tagged transcription hypothesis;
(3)~diacritic-agnostic normalisation of both OCR and reference text;
(4)~page-anchor detection locating the page in the book-level reference;
(5)~greedy line-level fuzzy alignment with confidence scores feeding
human-triaged review.
\begin{figure*}[tbp]
  \centering
  \includegraphics[width=0.78\textwidth]{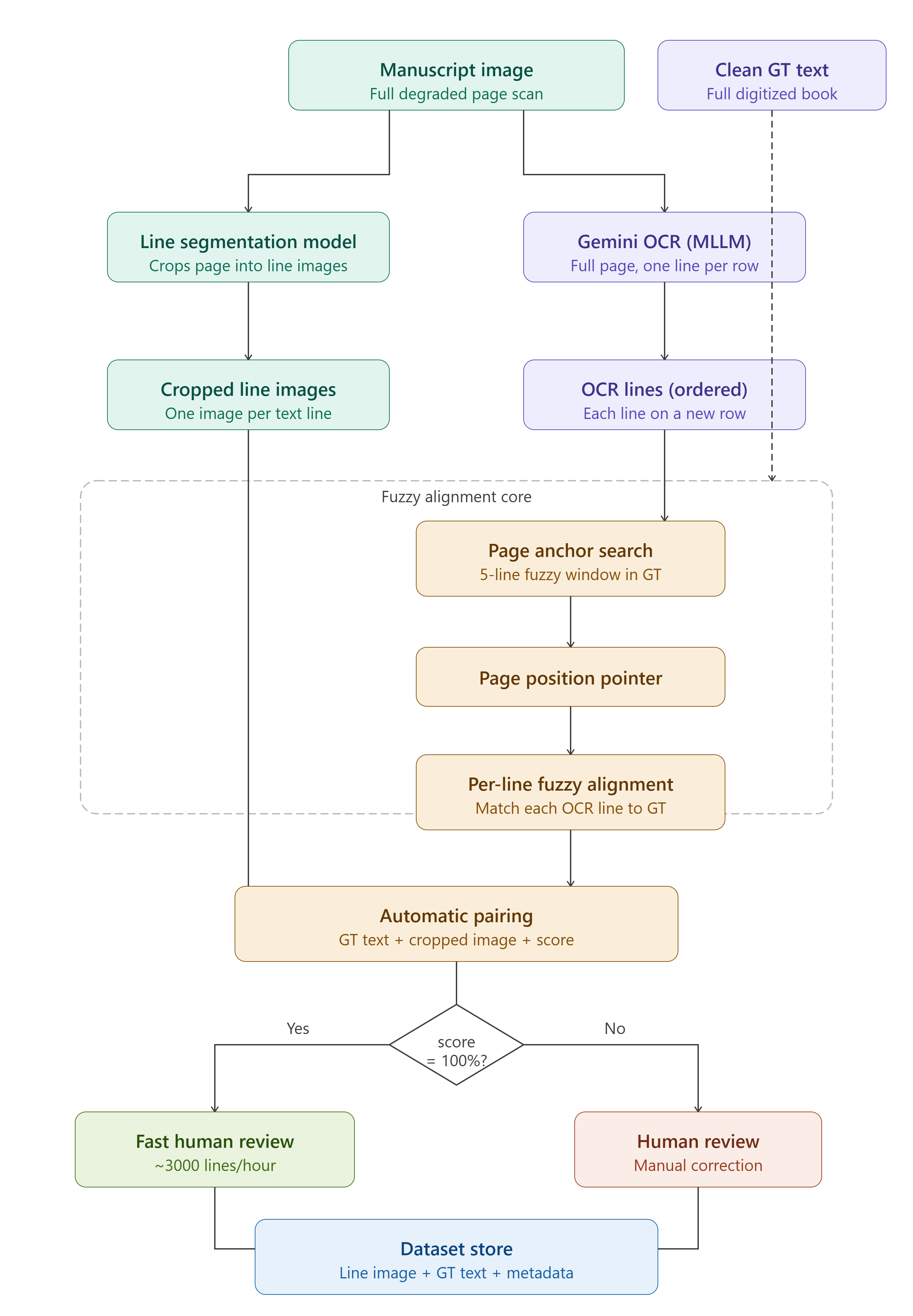}
  \caption{%
    \textbf{The RefLAM pipeline.}
  }
  \label{fig:pipeline}
\end{figure*}
\subsection{Line Segmentation}
\label{sec:segmentation}
To initialise line segmentation, we use the publicly available Kraken
segmentation model (trained on the Muharaf corpus) from Zenodo
\cite{bors2024seg}. This model outputs polygonal contours for each detected
line. Since polygon correction is more involved than rectangle adjustment,
our pipeline converts these polygons to axis-aligned bounding boxes during
preprocessing, while retaining the original polygons for pixel-level mask
generation when needed.
A human reviewer then verifies that no line has been missed, incorrectly
merged, or assigned to the wrong sequential index—issues that can arise due
to severe degradation or overlapping text. Additionally, the fuzzy alignment
stage occasionally reveals segmentation mismatches (e.g., a line split across
two predictions, or a prediction corresponding to the wrong line index),
requiring further human intervention. Approximately 2\% of lines require
manual adjustment, primarily to correct misaligned polygons or resolve
margin--main text confusion. Despite these corrections, the reviewer does not
draw boxes from scratch; they validate, adjust, and correct the
model-provided initialisations.
Line regions are stored in two formats: the majority of lines use
\emph{boundary polygons} (ordered vertex sequences stored as arrays of
$[x, y]$ pairs), which faithfully follow the contour of each text line, and
approximately 2\% of lines use axis-aligned \emph{bounding boxes}
($[x, y, w, h]$).
\begin{figure*}[tbp]
  \centering
  \begin{subfigure}[t]{0.31\textwidth}
    \includegraphics[width=\linewidth]{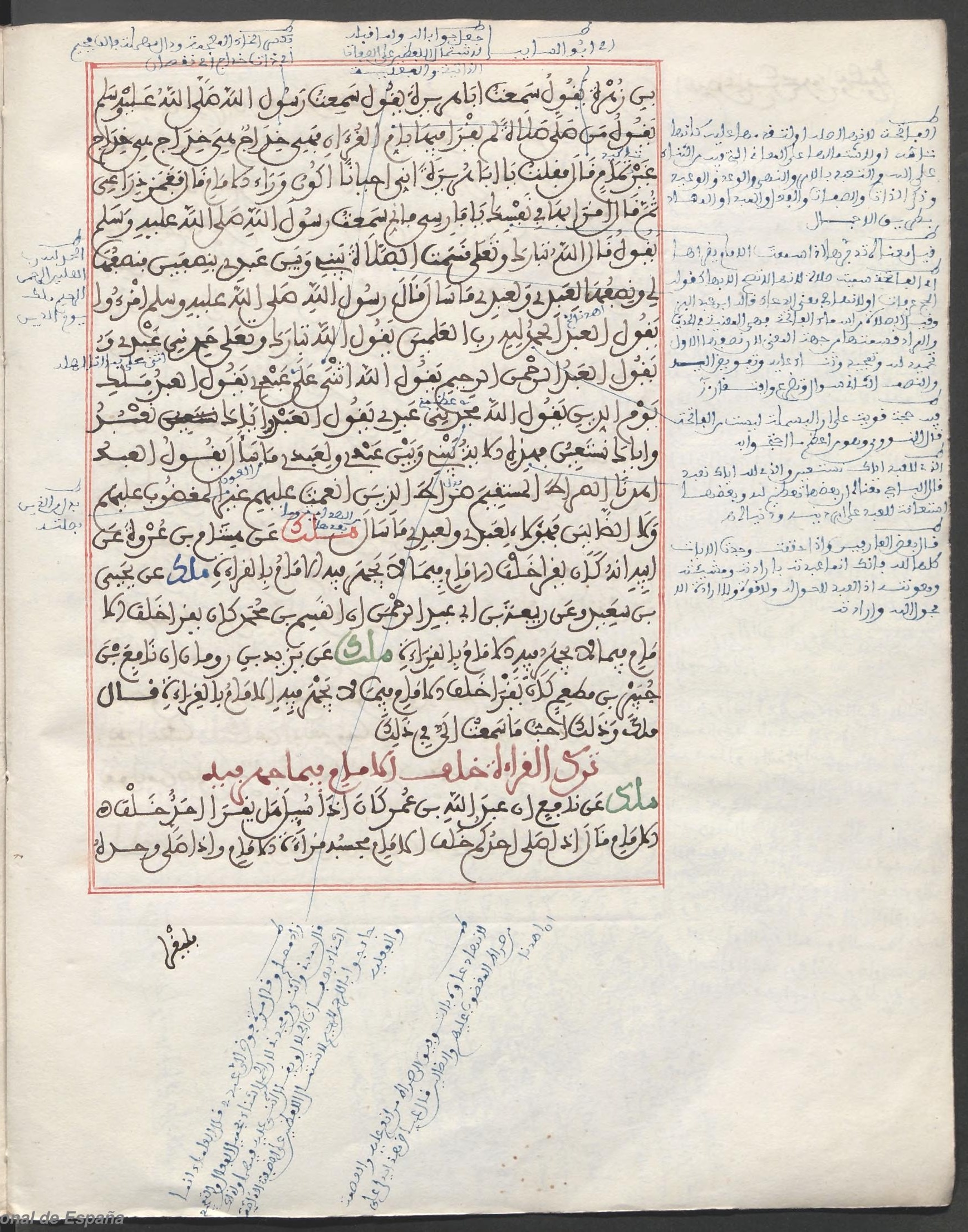}
    \caption{A representative page from \texttt{book\_19} (Maghrebi script).
      }
    \label{fig:sample_book19}
  \end{subfigure}
  \hfill
  \begin{subfigure}[t]{0.31\textwidth}
    \includegraphics[width=\linewidth]{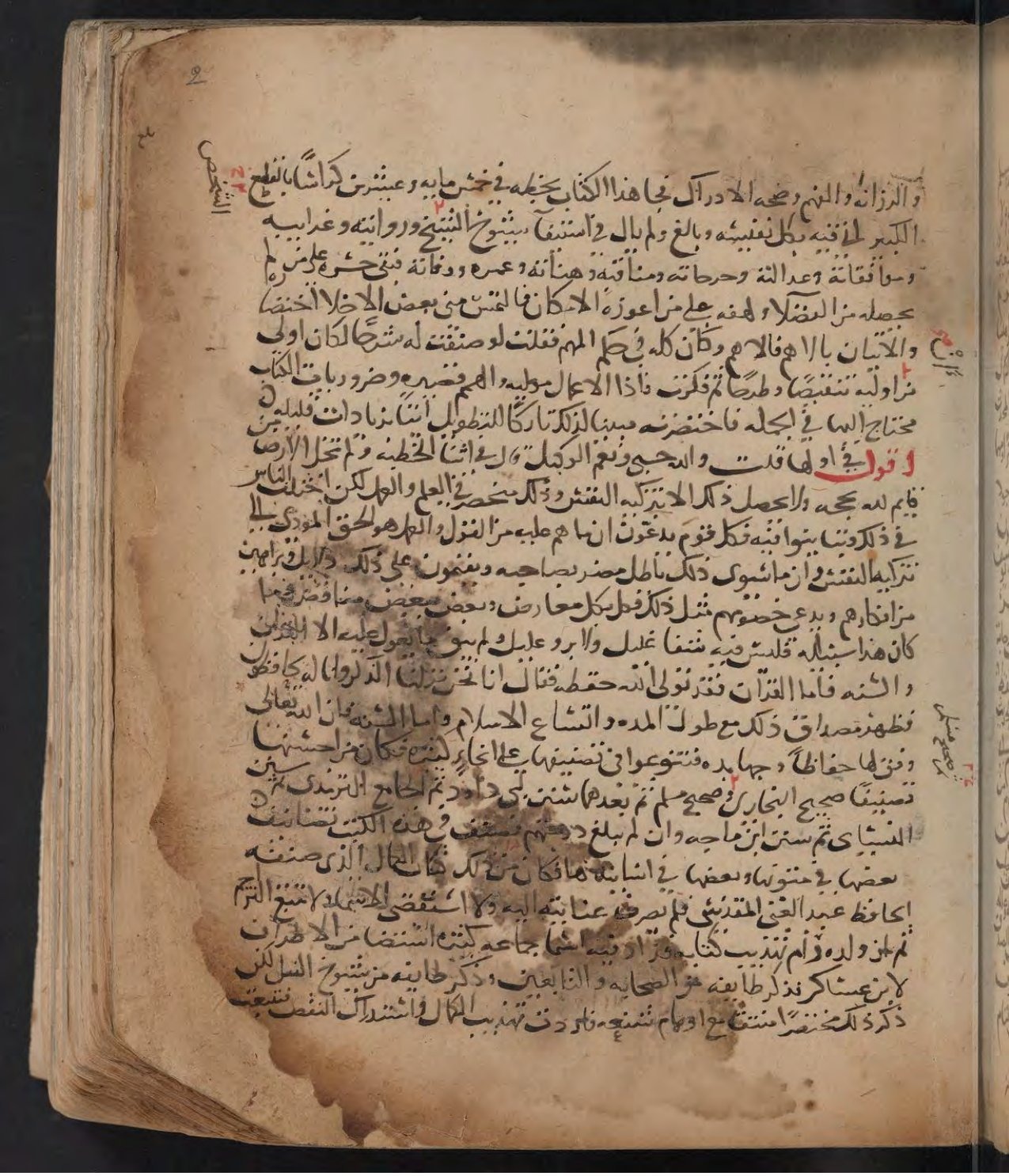}
    \caption{A page from \texttt{book\_09} (Naskh script) }
    \label{fig:sample_book09}
  \end{subfigure}
  \hfill
  \begin{subfigure}[t]{0.31\textwidth}
    \includegraphics[width=\linewidth]{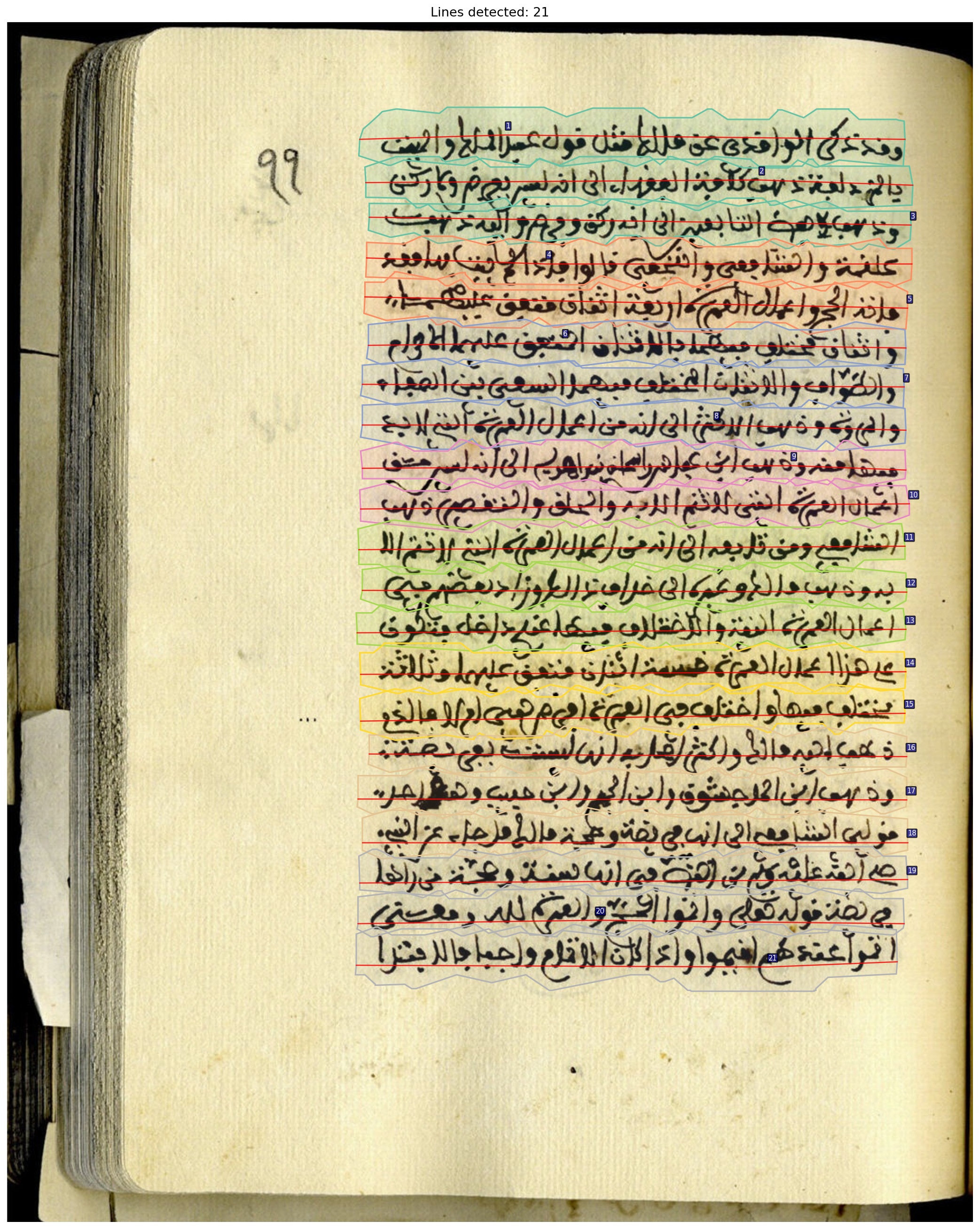}
    \caption{  A page from \texttt{book\_03} segmented }
    \label{fig:sample_segmented}
  \end{subfigure}
  \caption{%
    \textbf{AraMS-28k manuscript samples.}
  }
  \label{fig:samples}
\end{figure*}
\subsection{MLLM Structured OCR}
\label{sec:mllm}
Each page image is submitted \emph{whole} to Google Gemini
(\texttt{gemini-3-flash-preview}) \cite{gemini3flash2026} so that the model can exploit global layout
cues to separate main-body from marginal text.  The model is instructed to emit
each readable text line on its own line; to tag marginal text with an
Arabic-language margin marker, leaving main text untagged; and to return
\emph{only} Arabic text with no commentary.  The full production prompt (v2.3)
is reproduced in \cref{app:prompt}.
The MLLM output is treated throughout as a \emph{noisy label, never as ground
truth}.  Hallucinated content fails alignment and receives a low confidence
score, routing it to detailed review.  The alignment-and-confidence mechanism
therefore doubles as a \emph{hallucination detector}: the reference text is
the anchor of truth.
\subsection{Diacritic-Agnostic Normalisation}
\label{sec:normalisation}
Because manuscript OCR is typically undiacritised while the reference is fully
vocalised, direct string comparison fails.  We normalise both sides identically
before any comparison.
\begin{definition}[Canonical normalisation]
\label{def:norm}
The operator $\mathrm{norm}(\cdot)$ maps a raw string to a canonical string by:
(1)~stripping all Arabic harakat\arbp{ـَ ـُ ـِ ـً ـٌ ـٍ ـّ ـْ}
(U+064B--U+065F) and the superscript alef\arbp{ـٰ} (U+0670);
(2)~removing the kashida\arbp{ـ} (U+0640);
(3)~mapping alef variants\arbp{أ إ آ ٱ} to a single base alef\arbp{ا};
(4)~mapping alef-maq\d{s}\=ura\arbp{ى} to y\=a'\arbp{ي} and
t\=a'-marb\=u\d{t}a\arbp{ة} to h\=a'\arbp{ه};
(5)~removing non-Arabic, non-whitespace, non-digit characters;
(6)~collapsing whitespace and trimming.
\end{definition}
Both raw fields (\texttt{gt\_raw} and \texttt{gemini\_raw}) are preserved in
the dataset, enabling future diacritisation-restoration research.
\subsection{Page-Level Anchor Detection}
\label{sec:anchor}
Before line alignment, each page is located within the book-level reference—the
\emph{page-anchor problem}.  A page signature $P$ is built from the first five
normalised main OCR lines.  A fixed-width window of reference lines is scored
against $P$ at every offset via exhaustive linear scan (\cref{alg:anchor}).
The partial-ratio variant $C_{\text{partial}}$ (\cref{eq:cpartial}) is used
since the page signature may be a substring of a larger reference region.
\begin{algorithm}[tbp]
\caption{Page-anchor detection (exhaustive scan with early exit)}
\label{alg:anchor}
\begin{algorithmic}[1]
\Require Page signature $P$; GT lines $g_{1:N}$; window $w$; lookback $\delta$
\Ensure Anchor offset $s^*$, score $C^*$
\If{$P$ is empty} \Return $\mathit{ptr}$ \EndIf
\State $C^* \leftarrow 0$; $s^* \leftarrow \mathit{ptr}$
\For{$s \leftarrow 0$ \textbf{to} $N-1$}
  \State $W \leftarrow \text{join of normalised } g_s, \ldots, g_{\min(s+w,N)-1}$
  \State $c \leftarrow C_{\text{partial}}(P, W)$
  \If{$c > C^*$} $C^* \leftarrow c$; $s^* \leftarrow s$ \EndIf
  \If{$C^* \geq 95$} \textbf{break} \EndIf
\EndFor
\If{$C^* \geq 50$}
  \Return $\max(0, s^* - \delta)$, $C^*$
\Else\ \Return \textsc{Fail}
\EndIf
\end{algorithmic}
\end{algorithm}
\subsection{Line-Level Fuzzy Alignment}
\label{sec:alignment}
Given a confirmed page anchor, each OCR line is aligned to a contiguous span of
the flat reference word array in reading order (\cref{alg:align}).  The
character-level similarity for ordinary line-to-window matching is the indel
(LCS-based) ratio\cite{hyyro2004bitparallel}:
\begin{equation}
  C(o_i, g_{j:k}) = \frac{2 \cdot \mathrm{LCS}(A, B)}{|A| + |B|} \times 100,
  \label{eq:conf}
\end{equation}
where $A = \mathrm{norm}(o_i)$ and $B = \mathrm{join}(\mathrm{norm}(g_j),
\ldots, \mathrm{norm}(g_k))$, implemented as \texttt{rapidfuzz.fuzz.ratio}\cite{rapidfuzz}.
For page-anchor detection and margin-line matching we use the asymmetric
partial-ratio:
\begin{equation}
  C_{\text{partial}}(A, B) = \max_{B' \sqsubseteq B,\, |B'|=|A|}
    \frac{2 \cdot \mathrm{LCS}(A, B')}{2|A|} \times 100.
  \label{eq:cpartial}
\end{equation}
The alignment is \emph{greedy and single-pass}: each line is matched against
the best window near the current pointer; the pointer advances past the matched
span; the next line is searched from there.
\begin{algorithm}[tbp]
\caption{Greedy windowed line alignment}
\label{alg:align}
\begin{algorithmic}[1]
\Require Lines $o_{1:M}$; GT word array length $N$; pointer $\mathit{ptr}$;
  lookback $L$; window $F$; tolerance $\tau$;
  $\theta_{\text{main}}{=}72$, $\theta_{\text{margin}}{=}55$
\Ensure Per-line spans $(j_i, k_i)$ and confidences $C_i$
\For{$i \leftarrow 1$ \textbf{to} $M$}
  \State $a \leftarrow \mathrm{norm}(o_i)$; skip if $|a| < \textsc{MinOcrChars}$
  \State $n \leftarrow$ word count of $a$
  \If{$o_i$ is a main line}
    \State Range $[\mathit{ptr}{-}L,\, \mathit{ptr}{+}F]$; windows $[\max(2,n{-}\tau),\, n{+}\tau]$
    \State Score with $C$ \eqref{eq:conf}; early-exit if $\geq 98$
    \State Threshold $\leftarrow \theta_{\text{main}}$
  \Else
    \State Range $[\max(\mathit{ptr}_{\text{anchor}},\, e_{\text{ref}} - 30),\;
      \min(e_{\text{page}} + 30, N)]$, where $e_{\text{ref}}$ is the word-end
      of the nearest matched main line (or $\mathit{ptr}_{\text{anchor}}$ if none)
    \State Score with $C_{\text{partial}}$ \eqref{eq:cpartial}; early-exit if $\geq 98$
    \State Threshold $\leftarrow \theta_{\text{margin}}$
  \EndIf
  \If{best score $\geq$ threshold}
    \State Record span and score; if main, advance $\mathit{ptr}$ past span
  \Else
    \State Record miss (confidence 0); $\mathit{ptr}$ unchanged
  \EndIf
\EndFor
\end{algorithmic}
\end{algorithm}
\paragraph{Parameter selection and sensitivity.}
All window and threshold parameters ($L$, $F$, $\tau$,
$\theta_{\text{main}}{=}72$, $\theta_{\text{margin}}{=}55$,
\textsc{MinOcrChars}) were tuned manually on the first books processed
during an initial end-to-end iteration of the pipeline, then frozen for the
remainder of construction. Two design choices were validated by ablation
during development: removing the page-anchor stage (\cref{sec:anchor})
increases alignment errors by a factor of 3--5, and enforcing the monotonic
pointer advance improves line-order coherence over unconstrained per-line
matching. Because every line is subsequently human-reviewed
(\cref{sec:humanreview}), these parameters affect how efficiently lines are
routed for review, not whether errors enter the release: a sub-optimal
threshold sends more lines to detailed review; it does not admit mistakes.
\subsection{The Confidence-100 Rule}
\label{sec:conf100}
\begin{proposition}[Confidence-100 rule]
\label{prop:conf100}
Let $A = \mathrm{norm}(o_i)$ and $B = \mathrm{norm}(g_{j:k})$.  If
$C(o_i, g_{j:k}) = 100$ under \cref{eq:conf}, then $A = B$
character-for-character.
\end{proposition}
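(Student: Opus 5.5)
The plan is a direct argument from the definition of the indel ratio in \cref{eq:conf}, using only elementary properties of the longest common subsequence. First I will dispose of the degenerate case: if $|A| + |B| = 0$, the ratio is undefined. Under the \texttt{rapidfuzz} convention two empty strings score $100$, and then $A = B = \varepsilon$ trivially. In the pipeline this case never arises anyway, because \cref{alg:align} skips lines with $|a| < \textsc{MinOcrChars}$. So from now on I will assume $|A| + |B| > 0$.

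The key inequality is $\mathrm{LCS}(A,B) \le \min(|A|,|B|)$. This holds because a common subsequence is in particular a subsequence of each string, so its length cannot exceed either string's length. It follows that $2\,\mathrm{LCS}(A,B) \le 2\min(|A|,|B|) \le |A| + |B|$. Now suppose $C(o_i,g_{j:k}) = 100$, which means $2\,\mathrm{LCS}(A,B) = |A| + |B|$. Then both inequalities in the chain are tight. Tightness of the second gives $\min(|A|,|B|) = (|A|+|B|)/2$, which forces $|A| = |B| =: n$. Tightness of the first then gives $\mathrm{LCS}(A,B) = n$.

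The final step is to show that a common subsequence of length $n$ in two strings of length $n$ is the strings themselves. A subsequence of $A$ of length $|A|$ must use every index $1,\dots,n$ in increasing order, so it equals $A$. The same reasoning applies to $B$. Hence $A = B$, character for character.

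I do not expect any step to be a genuine obstacle; the only care needed is bookkeeping. Two points need an explicit remark. First, $\mathrm{join}$ in the definition of $B$ inserts single spaces, and step (6) of \cref{def:norm} makes whitespace canonical, so the identity $A = B$ is an identity of the normalised strings, spaces included. Second, the conclusion concerns the exact rational value $100$, not a rounded score. If the implementation rounds, I would add a remark that a reported $100$ after rounding only guarantees $2\,\mathrm{LCS} \ge 0.995\,(|A|+|B|)$. For realistic line lengths (under roughly $200$ characters combined) this still forces equality, because $|A|+|B| - 2\,\mathrm{LCS}$ is a nonnegative integer smaller than $1$, so it must be $0$.
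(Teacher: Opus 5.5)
Your proof is correct and matches the paper's argument: you bound $\mathrm{LCS}(A,B)\le\min(|A|,|B|)$, use $C=100$ to force $|A|=|B|=\mathrm{LCS}(A,B)=n$, and conclude that a common subsequence of full length $n$ must be both strings, so $A=B$. Your extra remarks on the empty-string case and on rounding are sound refinements the paper does not address; in particular, a reported $100$ still forces exact equality whenever $|A|+|B|<200$.
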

\begin{proof}
$C = 100$ iff $2\,\mathrm{LCS}(A,B) = |A| + |B|$.  Since
$\mathrm{LCS}(A,B) \leq \min(|A|,|B|)$, equality forces $|A|=|B|=:n$
and $\mathrm{LCS}(A,B)=n$; a common subsequence of length $n$ between two
length-$n$ sequences can only be the full sequence, hence $A=B$.
\end{proof}
Consequently, $C=100$ certifies character-for-character identity in normalised
space, regardless of any diacritisation or letter-variant differences between
raw manuscript and raw reference text—precisely the differences that
\cref{def:norm} absorbs.
\begin{corollary}[Margin lines]
For margin lines scored with $C_{\text{partial}}$, a score of 100 implies $A$
is identical to some contiguous window $B'$ of the searched region; the matched
text is correct, though the offset may not be unique for short repeated
formulae.
\end{corollary}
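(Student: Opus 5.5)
The corollary follows from the definition of $C_{\text{partial}}$ in \cref{eq:cpartial}, by the same length-counting argument used for \cref{prop:conf100}. Let $A=\mathrm{norm}(o_i)$ be the normalised margin line and $B$ the joined normalised reference region searched in \cref{alg:align}. A score of $100$ means the maximum in \cref{eq:cpartial} equals $100$. Because the candidate set $\{B' \sqsubseteq B : |B'|=|A|\}$ is finite, the maximum is attained. So there is a contiguous window $B'$ of $B$ with $|B'|=|A|=:n$ and $2\,\mathrm{LCS}(A,B')/(2n)=1$, that is, $\mathrm{LCS}(A,B')=n$.

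Next I would apply the final step of the proof of \cref{prop:conf100} directly. A common subsequence of length $n$ between two strings of length $n$ must use every character of each string, in order. Hence $A=B'$ character-for-character. This is the special case $|A|=|B'|$ of the proposition: the partial ratio on $(A,B')$ coincides with the ratio $C(A,B')$, since $|A|+|B'|=2n$. So I can invoke \cref{prop:conf100} verbatim on the pair $(A,B')$. As in the proposition, the identity holds in normalised space. Any diacritic or alef-variant differences absorbed by \cref{def:norm} are invisible, which is the sense in which the matched text is correct.

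For the non-uniqueness remark, I would give a short witness rather than a proof. If $B$ contains two disjoint (or overlapping) occurrences of the same substring equal to $A$, both occurrences attain score $100$. The maximiser $B'$, and hence the recorded offset, is then not determined by the score. This happens in practice for short formulaic phrases such as repeated invocations.

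The only step needing care is a convention issue, not mathematics. One must fix the reading of $B' \sqsubseteq B$ as a \emph{contiguous} substring, which is how \texttt{rapidfuzz.fuzz.partial\_ratio} uses it, and handle the degenerate case $|A|>|B|$. In that case the candidate set is empty, or the implementation swaps the roles of the two strings. I would state the convention that the corollary applies when $|A|\le |B|$, which the margin search range of \cref{alg:align} ensures in practice. If the roles are swapped, the same argument shows that $B$ equals a window of $A$.
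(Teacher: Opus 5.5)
Your proposal is correct and follows the argument the paper intends: the paper gives no separate proof of this corollary, and the result comes from applying the length-counting step of the proof of \cref{prop:conf100} to the maximising window $B'$. Because $|B'|=|A|$, the partial-ratio denominator $2|A|$ equals $|A|+|B'|$, so a score of $100$ forces $\mathrm{LCS}(A,B')=|A|$ and hence $A=B'$. Your added remarks are sound points the paper leaves unstated: reading $\sqsubseteq$ as a contiguous substring, which the word ``window'' in the statement requires, and noting that the case $|A|>|B|$ is degenerate.
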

\begin{remark}
\cref{prop:conf100} guarantees normalised-string identity.  It does not certify
that the diacritisation in \texttt{gt\_raw} is scholarly-correct, that the
bounding box is pixel-perfect, or that the matched span is unique in the corpus.
\end{remark}
\noindent\textbf{Empirical verification.}  The rule was conjectured before it
was proved: during full-page review of the first books, neither reviewer ever
encountered an incorrect confidence-100 line. To test the guarantee at corpus
scale, we then manually audited every confidence-100 line across all 14 books
in a confirmation tool showing the page image, OCR text, and matched
reference span side by side. The audit found zero errors.
\noindent\textbf{Important distinction.}  RefLAM does \emph{not} auto-accept
confidence-100 lines.  The rule is a property a reviewer \emph{relies on during
inspection} to move quickly—but does not bypass inspection itself.
\subsection{Human Review}
\label{sec:humanreview}
Every line, at every confidence level, is reviewed by a human before entering
the released dataset. Confidence-100 lines are confirmed through a \emph{rapid
visual-comparison path} (page image, segmented box, and matched reference span
side by side, confirmed in $\approx$1--2 seconds). Sub-100 lines are reviewed
in detail, with the reviewer free to accept, correct, or reject the match.
To support this process, we developed a dedicated single-page browser-based
review application. The tool presents three panels simultaneously: a line list
with confidence scores and layout tags, an editing panel for corrections, and
a zoomable page-image panel. Keyboard shortcuts support sustained review flow
across large volumes of lines. The tool was used for all human review in this
work and is released alongside the dataset.
\FloatBarrier
\section{The Resulting Corpus: AraMS-28k}
\label{sec:dataset}
RefLAM's end-to-end output is AraMS-28k, summarised in
Table~\ref{tab:dataset_summary}: 14 historical Arabic books — thirteen
hand-copied manuscripts spanning three script traditions (Naskh,
Ruq\textquoteleft ah, Maghrebi) and one lithographed printed edition —
comprising 3,043 pages, 27,971 main-text lines, and 629 margin lines
(sample pages in Figure~\ref{fig:samples}). For every line the pipeline
emits a layout label (\texttt{main}/\texttt{margin}), geometry
(\cref{sec:segmentation}), the aligned reference transcription with its
confidence score, a coarse orientation estimate, and — for margin lines with an unambiguous attachment point in the
main text, 191 of 629 ($\approx\!30\%$) --- a human-assigned
\emph{insertion anchor} recovering the page's non-linear reading order.
The remainder are retained with a null anchor: some are structurally
non-referential (catchwords, signatures, section explanations), while
others lack a clear attachment point; in neither case is a guess forced. Construction proceeded in two review
phases that differ in depth (\cref{sec:validation}): 548 fully
page-validated pages and 2,495 line-validated pages, from which only
confidence-100 main-text lines are released.
The corpus is split at the book level (train: 9 books, 19,739 lines;
validation: 2 books, 1,486 lines; test: 3 fully page-validated books, 6,746
lines, one per hand-copied script). Complete documentation — the full
annotation schema, layout statistics, comparison with prior corpora,
datasheet, and release formats — is provided in the companion dataset
paper~\cite{arams2026}; here the corpus serves as evidence of what the
pipeline produces at scale, and \cref{sec:analysis} analyses the pipeline
statistics behind it.
\begin{table}[htbp]
  \centering
  \caption{AraMS-28k at a glance. Full documentation in~\cite{arams2026}.}
  \label{tab:dataset_summary}
  \small
  \renewcommand{\arraystretch}{1.15}
  \begin{tabular}{@{}p{4.2cm}p{2.8cm}@{}}
    \toprule
    \textbf{Property} & \textbf{Value} \\
    \midrule
    Books & 14 \\
    Total pages & 3,043 \\
    \quad Fully page-validated & 548 \\
    \quad Line-validated & 2,495 \\
    Main-text lines & 27,971 \\
    Margin lines & 629 \\
    Scripts & Naskh, Ruq\textquoteleft ah, Maghrebi \\
            & (+1 lithographed vol.) \\
    Anchored margin lines & 191 / 629 ($\approx\!30\%$) \\
    Train / val / test split & 9 / 2 / 3 books \\
    License & CC BY-NC-SA 4.0 \\
    \bottomrule
  \end{tabular}
\end{table}
\FloatBarrier
\section{Analysis}
\label{sec:analysis}
\subsection{Confidence Score Distribution}
Figure~\ref{fig:confidence} shows the distribution of alignment confidence
scores by book, binned in ten-point increments, for the page-validated
subset. Two patterns stand out. First, the distribution is strongly bimodal:
for most books the majority of lines land either in the 90--100 bin or below
50, with comparatively few lines in the intermediate 60--89 range. This is
consistent with the failure modes in \cref{sec:erroranalysis} being largely
binary in effect—either the MLLM output matches the reference near-exactly,
or a specific disruption (hallucination, degradation, margin overlap) causes
a sharp drop in similarity rather than a graceful partial match. Second, the
proportion of mass in the top bin varies substantially by book (cf.\ the
C=100 column of Table~\ref{tab:perbook}), tracking script regularity and scan
quality more than any other single factor: \texttt{book\_06} (Naskh, clean
scan) sits at 99.9\% confidence-100, while \texttt{book\_03} (Maghrebi,
degraded scan) sits at 8.4\%.
\begin{table}[htbp]
  \centering
  \caption{%
    Per-book alignment agreement. ``C=100'' is the share of reviewed main
    lines achieving a perfect alignment score against the reference
    (\cref{prop:conf100}), plotted in Figure~\ref{fig:confidence}.
    \textbf{For LV books, C=100 is 100\% \emph{by construction}}: only
    confidence-100 lines were retained for release from these books
    (\cref{sec:validation}). \texttt{book\_10} is a lithographed printed
    edition rather than a hand-copied manuscript. Full per-book dataset
    statistics appear in~\cite{arams2026}.
  }
  \label{tab:perbook}
  \small
  \renewcommand{\arraystretch}{1.1}
  \begin{tabular}{@{}lllrrr@{}}
    \toprule
    Book & Type & Script & Pages & Lines & C=100 \\
    \midrule
    \texttt{book\_03} & PV & Maghrebi     & 172 & 3,661 & 8.4\% \\
    \texttt{book\_05} & PV & Ruq\textquoteleft ah & 95 & 2,028 & 28.0\% \\
    \texttt{book\_06} & PV & Naskh        &  41 &   874 & 99.9\% \\
    \texttt{book\_09} & PV & Naskh        &  46 & 1,057 & 9.8\% \\
    \texttt{book\_10} & PV & Lithograph   &  13 &   674 & 14.1\% \\
    \texttt{book\_11} & PV & Naskh        &  30 &   612 & 12.1\% \\
    \texttt{book\_27} & PV & Naskh        & 151 & 2,532 & 11.2\% \\
    \midrule
    \textit{PV subtotal} & & & 548 & 11,438 & 21.0\% \\
    \midrule
    \texttt{book\_12} & LV & Naskh   & 144 & 2,015 & 100\% \\
    \texttt{book\_16} & LV & Naskh   & 392 & 2,276 & 100\% \\
    \texttt{book\_17} & LV & Naskh   & 584 & 3,731 & 100\% \\
    \texttt{book\_19} & LV & Maghrebi& 122 &   666 & 100\% \\
    \texttt{book\_20} & LV & Maghrebi& 312 & 2,163 & 100\% \\
    \texttt{book\_21} & LV & Ruq\textquoteleft ah & 496 & 3,282 & 100\% \\
    \texttt{book\_24} & LV & Maghrebi& 445 & 2,400 & 100\% \\
    \midrule
    \textit{LV subtotal} & & & 2,495 & 16,533 & 100\% \\
    \midrule
    \textbf{Total} & & Mixed & \textbf{3,043} & \textbf{27,971} & \\
    \bottomrule
  \end{tabular}
\end{table}
\begin{figure}[htbp]
  \centering
  \includegraphics[width=\columnwidth]{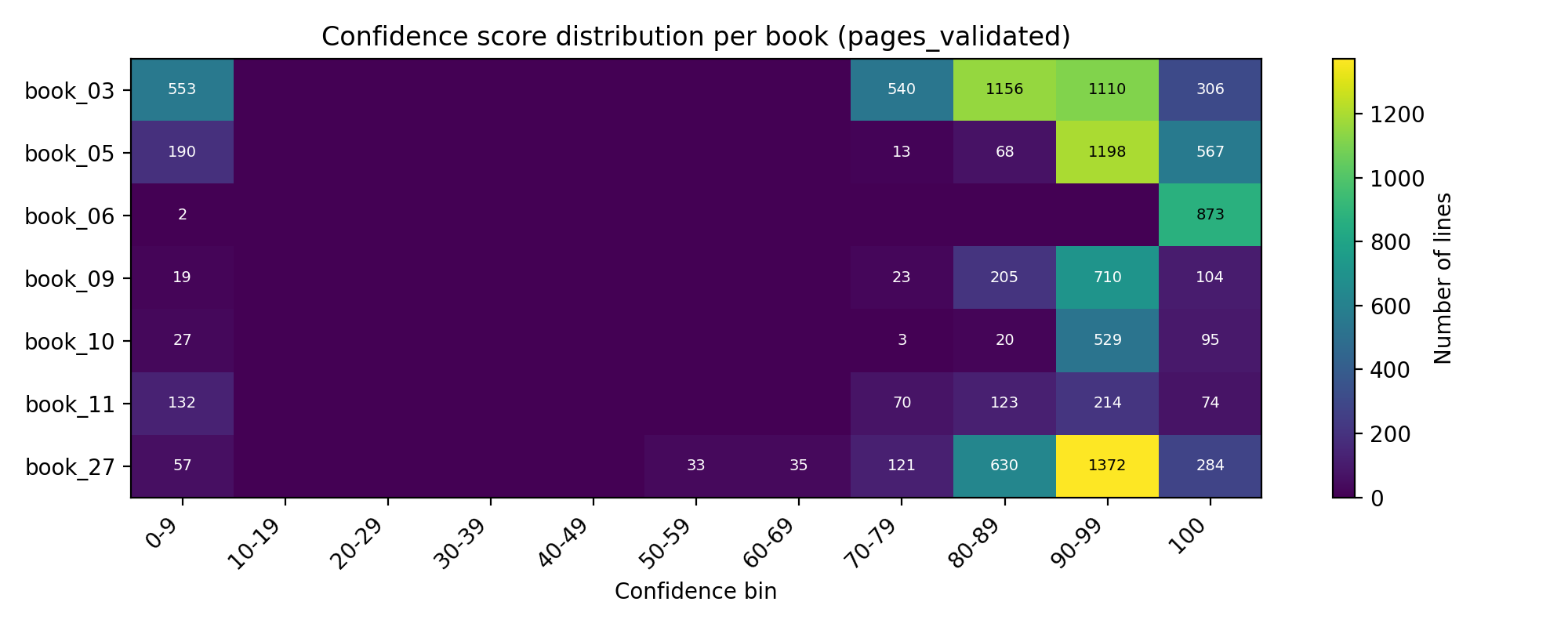}
  \caption{%
    \textbf{Confidence-score distribution per book (page-validated books).}
  }
  \label{fig:confidence}
\end{figure}
\subsection{Error Analysis: Low-Confidence Lines}
\label{sec:erroranalysis}
Lines in the intermediate confidence range (roughly 60--99) require detailed
review.  We identify three representative failure modes:
\noindent\textbf{Scan degradation.}  Faded ink or bleed-through causes the
MLLM to omit or hallucinate characters, producing a low fuzzy score against
the correct reference span.
\noindent\textbf{Unusual script forms.}  Rare ligatures or calligraphic variants
not captured by the orthographic-variant-merging step of \cref{def:norm} produce
spurious mismatches.
\noindent\textbf{Margin--main overlap.}  Margin lines physically overlapping
main-text lines cause the MLLM to merge them into a single multi-line OCR
output, which cannot align cleanly to a single reference span.
All such cases are retained with their sub-100 confidence scores and review
metadata intact, making them identifiable for researchers who wish to filter
or study them specifically.
\FloatBarrier
\section{Validation and Throughput}
\label{sec:validation}
Validation followed a two-phase protocol informed by empirical observation
during the annotation process, summarised in Table~\ref{tab:perbook}.
\paragraph{Phase 1 --- Page-Validated (PV) books.}
The seven PV books (\texttt{book\_03}, \texttt{05}, \texttt{06}, \texttt{09},
\texttt{10}, \texttt{11}, \texttt{27}; 548 pages, 11,438 lines) were validated
by two independent reviewers, with every line inspected.  Empirically,
$C{=}100$ lines were error-free across all seven books.
\paragraph{Phase 2 — Line-Validated (LV) books.}
For the seven LV books (\texttt{book\_12}, \texttt{16}, \texttt{17},
\texttt{19}, \texttt{20}, \texttt{21}, \texttt{24}; 2,495 pages, 16,533
lines), we adopted a more selective annotation strategy. Rather than manually
correcting every misaligned or low-confidence line, we retained only
main-text lines that achieved a perfect Confidence-100 score; all sub-100
lines were excluded from this release rather than manually corrected. This
filtering leaves a subset of lines that are provably character-for-character
identical to the reference text under \cref{prop:conf100}. These retained
lines were confirmed through a rapid visual-comparison path—page image,
matched reference span, and OCR text side by side—typically resolved in
$\approx$0.4--2 seconds per line. This stratified approach dramatically
reduced review time while ensuring that every included annotation benefits
from the Confidence-100 guarantee and retains human oversight.
\subsection{Throughput Analysis}
We measured the baseline throughput by timing a trained annotator's full
validation workflow—adjusting bounding boxes pre-generated by a model trained
on Muharaf~\cite{saeed2024muharaf} and manually verifying/correcting
transcriptions—on a representative 50-page ($\approx$2,250-line) subset.
This baseline came out at approximately 40 lines per person-hour. Sub-100
lines were reviewed at essentially that manual rate. For confidence-100
lines the character-level alignment is already perfect, so the annotator
only confirms the segmentation at a glance — about 3,000 lines per hour, a
75$\times$ internal speedup over the manual baseline, measured on the same
PV books under the same conditions.
For context, Muharaf~\cite{saeed2024muharaf} collected 36,311 lines over
roughly twelve months; assuming a single full-time annotator
(160 hrs/month), that is an upper bound of about 19 lines/hr. Our peak
3,000 lines/hr on confidence-100 lines represents a $\sim$158$\times$
speedup over that rate, and the whole 14-book corpus took one calendar month
with two annotators — the LV phase alone about a week.
\begin{figure}[htbp]
  \centering
  \includegraphics[width=\columnwidth]{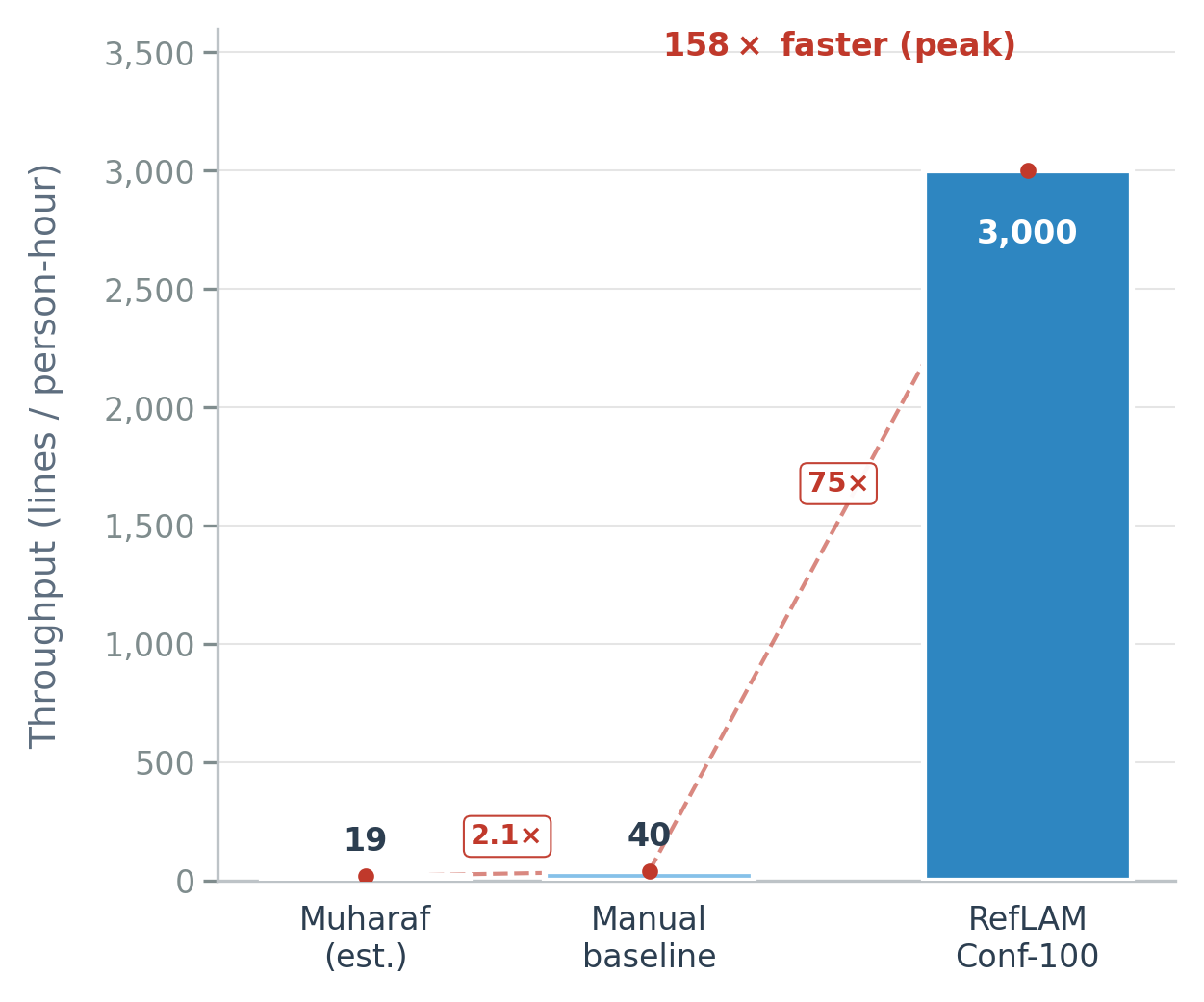}
  \caption{Throughput comparison (lines per person-hour).}
  \label{fig:throughput}
\end{figure}

\subsection{Baseline HTR Results}
\label{sec:baseline}
To demonstrate AraMS-28k's utility for downstream training, we finetune two
Muharaf-pretrained recognition models on the AraMS-28k training split
(9 books, 19,739 lines) and evaluate on the held-out test split (3 books,
6,746 lines): (1) a standard Kraken
recogniser~\cite{kiessling2019kraken,bors2024rec}, and
(2) HATFormer~\cite{chan2025hatformer}. Both models share the same
pretraining corpus (Muharaf) and the same in-domain finetuning data,
isolating the effect of model architecture.
Table~\ref{tab:baseline} reports character error rates (CER) per book and
overall. Under identical finetuning conditions, Kraken outperforms HATFormer
overall (weighted CER: 23.31\% vs.\ 26.74\%), and per-book CER follows the
same ordering for both architectures --- Ruq\textquoteleft ah (11.65\% /
13.26\%), Naskh (22.62\% / 25.37\%), Maghrebi (32.71\% / 37.88\%). This
ordering is not explained by finetuning-data volume: Ruq\textquoteleft ah
has the smallest script-specific training set yet the lowest error, while
Maghrebi has substantially more data yet the highest. Instead, the gradient
tracks proximity to the pretrained Muharaf distribution, whose samples are
predominantly Ruq\textquoteleft ah script~\cite{saeed2024muharaf} --- our
Ruq\textquoteleft ah test book therefore benefits from a close
in-distribution match with the pretraining corpus itself, whereas Maghrebi
is a paleographically distinct tradition largely absent from that corpus's
Levantine-letters composition, so the gap originates in the pretrained
script prior rather than in the amount of AraMS-28k supervision. The full
experimental setup, per-book results table, a diagnostic in-distribution
condition, and the analysis of this cross-script gradient are given in the
dataset paper~\cite{arams2026}; here these baselines serve as downstream
validation that RefLAM's output supports practical HTR training. The finetuned
checkpoints  are released alongside the dataset
(\cref{sec:availability}).
\begin{table}[htbp]
  \centering
  \caption{CER after finetuning Muharaf-pretrained models on AraMS-28k.}
  \label{tab:baseline}
  \small
  \renewcommand{\arraystretch}{1.1}
  \begin{tabular}{@{}l r@{}}
    \toprule
    Model / Test subset & CER (\%) \\
    \midrule
    \multicolumn{2}{c}{\textbf{Kraken}~\cite{bors2024rec}} \\
    \quad \texttt{book\_03} (Maghrebi) & 32.71 \\
    \quad \texttt{book\_05} (Ruq\textquoteleft ah) & 11.65 \\
    \quad \texttt{book\_09} (Naskh) & 22.62 \\
    \quad \textbf{Overall} & \textbf{23.31} \\
    \midrule
    \multicolumn{2}{c}{\textbf{HATFormer}~\cite{chan2025hatformer}} \\
    \quad \texttt{book\_03} (Maghrebi) & 37.88 \\
    \quad \texttt{book\_05} (Ruq\textquoteleft ah) & 13.26 \\
    \quad \texttt{book\_09} (Naskh) & 25.37 \\
    \quad \textbf{Overall} & \textbf{26.74} \\
    \bottomrule
  \end{tabular}
\end{table}
\section{Conclusion}
\label{sec:conclusion}
We presented \textbf{RefLAM}, a pipeline that turns manuscript page images
and pre-existing clean transcriptions into validated, line-level ground
truth by treating MLLM OCR as a noisy hypothesis and the reference text as
the anchor of correctness. At its centre is the confidence-100 rule — a
maximal indel-similarity score forces character-for-character identity of
the normalised strings (\cref{prop:conf100}) — verified across the full
corpus without a single counterexample; it is what allows a 75$\times$
annotation speed-up without surrendering human oversight. Applied end to
end, the pipeline produced AraMS-28k (14 books, 3,043 pages, 28,600 line
annotations with layout labels and insertion anchors) in one calendar month
with two annotators; the corpus itself, its benchmark, and its full
documentation are the subject of the companion dataset
paper~\cite{arams2026}, and the Kraken and HATFormer finetuning experiments
(\cref{sec:baseline}) confirm that the pipeline's output supports practical
downstream HTR training.
Nothing in the design is specific to Arabic beyond the normalisation
operator of \cref{def:norm}: the reference-grounding principle transfers to
any historical script for which clean digital transcriptions exist —
Ottoman Turkish, Persian, Hebrew, and Syriac are plausible candidates.
\paragraph{Limitations.}
RefLAM requires a clean transcription to exist before annotation can begin.
Approximately 70\% of margin lines could not be confidently anchored.  The
line-level aligner is greedy rather than globally optimal.  The MLLM
implementation relies on a proprietary model, mitigated by response caching and
a narrow data contract designed to accept an open-source substitute.
\paragraph{Future work.}
(1)~Replacing the proprietary MLLM with an open-source vision-language model;
(2)~extending RefLAM to Persian and Ottoman Turkish manuscripts;
(3)~training and benchmarking additional HTR architectures on AraMS-28k beyond
the two baselines reported here;
(4)~replacing the coarse rotation estimate with continuous angle regression;
(5)~developing a semi-automated margin-to-main-line linking model.
\paragraph{Release.}
We release the dataset, train/validation/test splits, the RefLAM annotation
pipeline, and the browser-based correction tool under CC~BY-NC-SA~4.0. See
\cref{sec:availability} for access details.
\FloatBarrier

\section*{Data and Code Availability}
\label{sec:availability}

All resources are released under CC~BY-NC-SA~4.0.

\begin{itemize}[leftmargin=*,nosep]
  \item \textbf{AraMS-28k} (Images + JSONL, $\approx$2.05~GB): Zenodo
    \url{https://doi.org/10.5281/zenodo.22095333}; mirror
    \url{https://github.com/ArchaText/AraMS-28k-Dataset}.
  \item \textbf{AraMS-28k-HTR} (PNG crops + .gt.txt, $\approx$2.42~GB):
    HTR-ready line crops automatically derived from AraMS-28k and
    distributed pre-built on Zenodo
    \url{https://doi.org/10.5281/zenodo.21499649}.
  \item \textbf{Pipeline / review tool:}
    \url{https://github.com/ArchaText/Reflam-pipeline}.
\end{itemize}

SHA-256 checksums are supplied in each Zenodo archive for integrity
verification.

\bibliographystyle{unsrtnat}
\bibliography{references}
\FloatBarrier
\appendix
\section{RefLAM MLLM OCR Prompt (v2.3)}
\label{app:prompt}
The production prompt is reproduced verbatim below,
\begin{quote}
\small\ttfamily
You are performing OCR on a handwritten Arabic manuscript page.\\
The page has TWO types of text:\\
1. MAIN TEXT: the regular lines in the center/body of the page.\\
2. MARGIN TEXT: text in the margins (sides, top, bottom),\\
\phantom{xxx}often sideways or squeezed in.\\
\\
Rules:\\
1. Output EXACTLY the Arabic text --- do NOT summarize.\\
2. Do NOT add commentary or explanation.\\
3. For MAIN TEXT: output each physical line as one line,\\
\phantom{xxx}top to bottom, no numbering.\\
4. For MARGIN TEXT: prefix with a masrgin tag only.\\
\phantom{xxx}Example: ``and the writer said'' [margin writing].\\
5. Output all main lines first, then margin lines.\\
6. If no margin text, just output the main lines.\\
7. Do not add any english writing or any explanation with the output.\\
\\
Return ONLY Arabic text. No introduction. No extra text.
\end{quote}
\noindent\textbf{Version history.}
v1.0: Basic OCR, no layout differentiation.
v1.5: Added margin tag; main text untagged to reduce token overhead.
v2.0: Added instruction to include marginal content regardless of size or
orientation.
v2.3~(current): Added strict output constraints to minimise hallucination.

\balance
\end{document}